\def \geq {\geqslant}
\newtheorem{theorem}{Theorem}[section]
\newtheorem{definition}{Definition}[section]
\newtheorem{corollary}{Corollary}[section]
\newtheorem{result}{Result}[section]
\newtheorem{lemma}{Lemma}[section]
\newtheorem{conjecture}{Conjecture}[section]
\newtheorem{proposition}{Proposition}[section]
\def \be {\begin{eqnarray}}
\def \en {\end{eqnarray}}
\def \bee {\begin{enumerate}}
\def \ene {\end{enumerate}}
\def \tr {\\ \nonumber}
\def \l {\lambda}
\def \mmu {\boldsymbol\mu}
\def \E {\mathds{E}}
\def \th {\pi}
\def \R {\mathbb{R}}
\def \b {\mathbf{b}}
\def \tth {\bm{\pi}}
\def \S {\Sigma}
\def \x {\mathbf{x}}
\def \tr {\nonumber\\}
\def \leq {\leqslant}
\def \N {\mathcal N}
\def \xm {\overline{{\bf x}}}
\newcommand\blfootnote[1]{%
  \begingroup
  \renewcommand\thefootnote{}\footnote{#1}%
  \addtocounter{footnote}{-1}%
  \endgroup
}
\numberwithin{equation}{section}
\title{Comparing {EM}  with {GD} in Mixture Models of Two Components}
\author{ {\bf Guojun Zhang
}and Pascal Poupart \\
Computer Science, Waterloo AI Institute, University of Waterloo \\
Vector Institute\\
\{guojun.zhang, ppoupart\}@uwaterloo.ca
 \\
\And
{\bf George Trimponias}   \\
Noah's Ark Lab \\
Huawei    \\
g.trimponias@huawei.com}
\begin{document}

\maketitle
\begin{abstract}
The expectation-maximization ({EM}) algorithm has been widely used in minimizing the negative log likelihood (also known as cross entropy) of mixture models. However, little is understood about the goodness of the fixed points it converges to. In this paper, we study the regions where one component is missing in two-component mixture models, which we call \textit{one-cluster} regions. We analyze the propensity of such regions to trap {EM} and gradient descent ({GD}) for mixtures of two Gaussians and mixtures of two Bernoullis. In the case of Gaussian mixtures, {EM} escapes one-cluster regions exponentially fast, while {GD} escapes them linearly fast. In the case of mixtures of Bernoullis, we find that there exist one-cluster regions that are stable for {GD} and therefore trap {GD}, but those regions are unstable for EM, allowing {EM}  to escape. Those regions are local minima that appear universally in experiments and can be arbitrarily bad. This work implies that {EM} is less likely than {GD} to converge to certain bad local optima in mixture models\blfootnote{Accepted at Uncertainty of Artificial Intelligence (UAI) 2019. Camera-ready version. The source code can be found at \url{https://github.com/Gordon-Guojun-Zhang/UAI-2019}. }. 
\end{abstract}

\section{INTRODUCTION}

The {EM} algorithm has a long history dating back to 1886 \cite{newcomb1886generalized}. Its modern presentation was given in \cite{Dempster77} and it has been applied widely on the maximum likelihood problem in mixture models, or equivalently, minimizing the negative log likelihood (also known as cross entropy). Some basic properties of {EM} have been derived. For example, {EM} monotonically decreases the loss~\cite{Dempster77} and it converges to critical points~\cite{jeffwu}. 

Mixture models \cite{bishop} are important in clustering. The most popular one is the Gaussian mixture model (GMM). The optimization problem is non-convex, making the convergence analysis hard. This is true even in some simple settings such as one-dimensional or spherical covariance matrices. With separability assumptions, there has been a series of works proposing new algorithms for learning GMMs efficiently \cite{Vlassis:2002,Dasgupta:1999,Dasgupta:2000,Sanjeev:2001,Dasgupta:2007,Vempala:2004,Belkin:2010,Moitra:2010,Hsu:2013}. However, for the {EM} algorithm, not so much work has been done. One well-studied case is the two equally balanced mixtures \cite{balakrishnan2017statistical, xu2016global, daskalakis2016ten} where local and global convergence guarantees are shown. For more than two components, \cite{Jin:2016} shows that there exist arbitrarily bad local minima. There has also been some study of {EM} on the convergence of Gaussian mixtures with more than two components \cite{yan2017convergence, zhao2018statistical}.

GMMs are well suited in practice for continuous variables. For discrete data clustering, we need discrete mixture models. One common assumption people make is \textit{naive Bayes}, saying that in each component, the features are independent. This corresponds to the spherical covariance matrix setting in GMMs. Since discrete data can be converted to binary strings, in this paper we will focus on Bernoulli mixture models (BMMs), which have been applied in text classification \cite{juan2002use} and digit recognition \cite{juan2004bernoulli}. However, the theory is less studied. Although the Bernoulli and Gaussian distributions are both in the exponential family, we will show that BMMs have different properties from GMMs in the sense that {GD} is stable (i.e., gets trapped) at so-called one-cluster regions of BMMs, but unstable at (i.e., escapes) all one-cluster regions of GMMs. On the other hand, the two mixture models share similarities that we will explain. 

In the practice of clustering with mixture models, it is common for algorithms to converge to regions with only $k$ clusters out of $m$ desired clusters (where $k<m$). We call them \textit{$k$-cluster regions}. We observed in experiments that {EM} escapes such regions exponentially faster than GD for GMMs. For BMMs, {GD} may get stuck at some $k$-cluster regions and the probability can be very high, while {EM} always escapes such regions. These experimental results can be found in Section \ref{exp}.

Theoretically, $k$-cluster regions are difficult to study. In this paper, we focus on one-cluster regions in mixtures of two components, which can be considered as a first step towards the more difficult problem. We succeed in explaining the different escape rates for GMMs with two components. We find that {GD} escapes one-cluster regions linearly while {EM} escapes exponentially fast. For BMMs, our theorem shows that there exist one-cluster regions where {GD} will converge to, given a small enough step size and a close enough neighborhood, but {EM} always escapes such regions exponentially. Such one-cluster regions can be arbitrarily worse than the global minimum. Experiments show that this contrast does not only hold for mixtures of two Bernoullis, but also BMMs with any number of components. 

The rest of the paper is structured as follows. In Section \ref{background}, we give necessary background and notations. Following that, the main contributions are summarized in Section \ref{contribution}. In Section \ref{analysis}, we analyze the stability of {EM} and {GD} around one-cluster regions, for GMMs and BMMs with two components. The properties of such one-cluster regions are shown in Section \ref{struc_res}. We provide supporting experiments in Section \ref{exp} and conclude in Section \ref{sec:conclusion}.

\section{BACKGROUND AND NOTATIONS}\label{background}

A mixture model of $m$ components has the distribution:
\be\label{mixture_model}
p(\x) =  \sum_{c=1}^m \th_c f({\x| \boldsymbol \mmu_c}).
\en
with the \textit{mixing vector} ${\bm \pi}:=(\pi_1, \dots, \pi_m)$ on the $m-1$ dimensional probability simplex \cite{conop}: $\Delta_m :=\{\tth : \tth \succeq 0,\,{\bf 1}^T \tth= 1\}$.
Here $f(\cdot|\mmu_c)$ is the conditional distribution given cluster $c$, and parameters $\mmu_c$. The sample space (i.e., space of $\textbf{x}$) is $D$-dimensional.

We study the population likelihood, where there are infinitely many samples and the sample distribution is the true distribution. This assumption is common (e.g., in \cite{balakrishnan2017statistical}), and it allows us to separate estimation error from optimization error. Since we consider the problem of minimizing the negative log likelihood, the loss function is:
\be\label{loss}
\ell = -\E [\log p(\x)],
\en
where the expectation is over $p^*(\x)$, the true distribution. We assume by default this notation of expectation in this section. The loss function above is also known as cross entropy loss. 

The true distribution $p^*(\x)$ is assumed to have the same form as the model distribution: $p^*(\x) =  \sum_{c=1}^m \th^*_c f({\x| \boldsymbol \mmu^*_c})$,
where $\pi_c^* \in (0, 1)$. 
Notice that the population assumption is not necessary in this section. We could replace $p^*(\x)$ with a finite sample distribution, with i.i.d.~samples drawn from $p^*(\x)$.

\paragraph{Gaussian mixtures} For Gaussian mixtures, we consider the conditional distributions:
\be\label{GMM}
f({\x| \mmu_c})= {\mathcal N}({\x| \boldsymbol \mu_c}, I),
\en
where $\x\in \R^D$, $\mmu_c\in \R^D$ and the covariance matrix of each cluster ${\bm \Sigma}_c = I$.

\paragraph{Bernoulli mixtures} For Bernoulli mixtures, we have $\x \in \{0, 1\}^D$, $\mmu_c \in [0, 1]^D$ and:
\be\label{BMM}
f({\x| \mmu_c})= \prod_{i=1}^D B(x_{i}|\mu_{c, i}) = B({\x| \boldsymbol \mu_c}),
\en
where the Bernoulli distribution is denoted as $ B(x|\mu) = \mu^x (1-\mu)^{1-x}$.  Slightly abusing the notation, we also use $B({\x| \boldsymbol \mu_c})$ to represent the joint distribution of $x_1, \dots, x_D$ as a product of the marginal distributions. The expectation of the conditional distribution $B(\x|\mmu_c)$ is:
\be\label{conditional}
\E_{\x\sim B(\x|\mmu_c)}[\x] = \boldsymbol \mu_c,
\en
so $\boldsymbol \mu_c$ can be interpreted as the mean of cluster $c$. The covariance matrix of cluster $c$ is ${\bm \Sigma}_c={\rm diag}(\mmu_{c}*({\bf 1}-\mmu_{c}))$, where we use $*$ to denote element-wise multiplication. Also, we always assume $\mmu_1^* \in (0, 1)^D$, $\mmu_2^* \in (0, 1)^D$ and thus $\xm :=\E[\x] \in (0, 1)^D$.

\subsection{{EM} algorithm}
We briefly review the {EM}  algorithm \cite{bishop} for mixture models. Define the responsibility function\footnote{This is slightly different from the usual definition by a factor of $\pi_c$ (see, e.g., \cite{bishop}). } $\gamma_{c}(\x)$:
\be\label{response}
 \gamma_c(\x) := \frac{f(\x|\mmu_c)}{p(\x)}.
\en
 The {EM}  update map $M$ is:
\be\label{update}
M(\mmu_c) := \frac{\E[\x \gamma_c(\x)]}{\E[ \gamma_c(\x)]},\, M(\pi_c) := \pi_c \E[ \gamma_c(\x)].
\en
Sometimes, it is better to interpret \eqref{update} in a different way. Define an unnormalized distribution $\tilde{q}_c(\x) = p^*(\x)\gamma_c(\x)$. The corresponding partition function $Z_c$ and the normalized distribution $q_c(\x)$ can be written as:
\be
 Z_c = \int \tilde{q}_c(\x) d\x, \,q_c(\x) = \frac{\tilde{q}_c(\x)}{Z_c},
\en
where the integration is replaced with summation, given discrete mixture models. \eqref{update} can be rewritten as:
\be\label{update_EM}
M(\mmu_c) := \E_{\x\sim q_c(\x)}[\x],\, M(\pi_c) := \pi_c Z_c.
\en
This interpretation will be useful for our analysis near one-cluster regions in Section \ref{analysis}.

\subsection{Gradient Descent}
GD and its variants are the default algorithms in optimization of deep learning \cite{dlreview}. In mixture models, we have to deal with constrained optimization, and thus we consider projected gradient descent (PGD) \cite{pgd}. From  \eqref{mixture_model}, \eqref{loss} and \eqref{response}, the derivative over $\tth$ is:
\be
\label{GD_pi}\frac{\partial \ell}{\partial \pi_c} = -\E[\gamma_c(\x)] = -Z_c.
\en
With \eqref{GMM} and \eqref{BMM}, the derivatives over $\mmu_c$ are:
\be
\frac{\partial \ell}{\partial \mmu_c} &=&
-\pi_c \E [ \gamma_c(\x) {\bm \Sigma}_c^{-1}(\x - \mmu_c)]\\
\label{GD_mu}&=&-\pi_c {\bm \Sigma}_c^{-1} Z_c (\E_{\x\sim q_c(\x)}[\x] - \mmu_c).
\en
The equation above is valid for both GMMs and BMMs. However, for BMMs, ${\bm \Sigma}_c^{-1}$ raises numerical instability near the boundary of $[0, 1]^D$.   Therefore, we use the following equivalent formula instead in the implementation:
\be\label{grad_0_mu}
\frac{\partial \ell}{\partial \mu_{cj}} = -\pi_c \E\left[ \gamma_{c}(\x)\frac{(-1)^{1 - x_{j}}}{B(x_{j}|\mu_{cj})}\right].
\en

After a {GD} step, we have to project $(\tth, \mmu_c)$ back into the feasible space $\Delta_m \times [0, 1]^D$. For BMMs, we use Euclidean norm projection. The projection of the $\mmu_c$ part is simply:
\be\label{PGD}
P_{\mmu_c}(\mmu_c) = \min\{\max\{\mmu_c, {0}\}, {\bf 1}\},
\en
with $\min$ and $\max$ applied element-wise. Projection of $\mmu_c$ is not needed for GMMs. For the $\tth$ part, we borrow the algorithm from \cite{shalev2006efficient}, which essentially solves the following optimization problem:
\be
&&P_{\tth}(\tth) =  {\rm argmin}_{\tth'} ||\tth - \tth'||_2\\
&&\textrm{such that }\tth' \succeq 0, {\bf 1}^T \tth' = 1.
\en
 The projected gradient descent is therefore:
\be\label{gd_updates}
\tth \leftarrow P_{\tth}\left(\tth - \alpha \frac{\partial \ell}{\partial \tth}\right),\, \mmu_c \leftarrow P_{\mmu_c}\left(\mmu_c - \alpha \frac{\partial \ell}{\partial \mmu_c}\right),\tr
\en
with $\alpha$ the step size.  


\subsection{$k$-cluster region}\label{kcluster}

We define a $k$-cluster region and a $k$-cluster point as:
\begin{definition}
A $k$-cluster region is a subset of the parameter space where $||{\bm \pi}||_0 = k$, with $||\cdot||_0$ denoting the number of non-zero elements. An element in a $k$-cluster region is called a $k$-cluster point.
\end{definition} 
In this work, we focus on one-cluster regions and mixtures of two components. In Section \ref{exp}, we will show experiments on $k$-cluster regions for an arbitrary number of components. A key observation throughout our theoretical analysis and experiments, is that with random initialization, {GD} often converges to a $k$-cluster region where $k < m$, whereas {EM}  almost always escapes such $k$-cluster regions, with random initialization and even with initialization in the neighborhood of $k$-cluster regions.

Now, let us study one-cluster regions for mixture models of two components. WLOG, assume $\pi_1 = 0$. To study the stability near such regions, we consider $\pi_1 = \epsilon$, with $\epsilon$ sufficiently small. From \eqref{response}, the responsibility functions can be approximated as:
\be\label{gammas}
\gamma_1(\x) = \frac{f(\x|\mmu_1)}{f(\x|\mmu_2)},\, \gamma_2(\x) = 1.
\en
Under this approximation, $q_2(\x) = p^*(\x)$. For {EM} , $\mmu_2$ converges to $\xm$ within one step based on \eqref{update}.
For {GD}, $\mmu_2$ converges to $\xm$ at a linear rate\footnote{In the context of optimization, ``a sequence $\{x_k\}_{k=1}^{\infty}$ converges to $L$ at a linear rate" means that there exists a number $\mu\in (0, 1)$ such that $\lim_{k\to \infty}||x_{k+1} - L||/||x_k - L|| =\mu$.} based on \eqref{GD_mu}\footnote{For BMMs, assume $\mmu_2$ is not initialized on the boundary. The rate is upper bounded by the initialization as $\mmu_2$ converges to $\xm \in (0, 1)^D$.}. 

For convenience, we define $\b :=\mmu_1 - \mmu_2$ the difference between $\mmu_1$ and $\mmu_2$.

\subsection{Stable fixed point and stable fixed region}
A fixed point $p$ is stable under map $M$ if there exists a small enough feasible neighborhood $B$ of $p$ such that for any $p'\in B$, $\lim_{n\to \infty} M^n(p') = p$, where we use $M^n$ to denote a composition for $n$ times. Otherwise, the fixed point is called unstable. Similarly, a fixed region $R$ is stable under map $M$ if there exists a small enough feasible neighborhood $B$ of $R$ such that for any $p'\in B$, $\lim_{n\to \infty} M^n(p')$ exists and $\lim_{n\to \infty} M^n(p') \in R$. We will use these definitions in Section \ref{analysis}.

\section{MAIN CONTRIBUTIONS}\label{contribution}

We summarize the main contributions in this paper, starting from mixtures of two Gaussians. This is a well-known model that has been widely studied. However, we are not aware of any result regarding $k$-cluster points. As a first result in this line of research, we show that {EM} is better than {GD} in escaping one-cluster regions:
\begin{result}
Consider a mixture of two Gaussians with $\pi_1^*\in (0, 1)$, unit covariance matrices and true distribution
\be
p^*(\x) = \pi^*_1 \N(\x|\mmu^*, I) + \pi^*_2 \N(\x|-\mmu^*, I)
\en
When $\pi_1 = \epsilon$ is initialized to be small enough, $\mmu_2 = \xm$ and $\b^T \mmu^* \neq 0$, {EM}  increases $\pi_1$ exponentially fast, while {GD} increases $\pi_1$ linearly\footnote{Here, we use the usual notion of function growth. Do not confuse it with the rate of convergence.}.
\end{result}
This result indicates that {EM}  escapes the neighborhood of one cluster-regions faster than {GD} by increasing the probability $\pi_1$ of cluster 1 at a rate that is exponentially faster than {GD}.  The escape rates of {EM}  and {GD} are formally proven in Theorems~\ref{balance_EM} and~\ref{balance_GD}.

Our second result concerns mixtures of two Bernoullis:

\begin{result}\label{cont_BMM}
For mixtures of two Bernoullis, there exist one-cluster regions that can trap {GD}, but do not trap {EM}. If any one-cluster region traps {EM}, it will also trap GD.
\end{result}

This result is formally stated in Theorems~\ref{trap_EM_trap_GD} and~\ref{unstable_proof}. It shows that {EM} is also better than {GD} in escaping one-cluster regions in BMMs. Our third result concerns the value of one-cluster regions, which is an informal summary of Theorems \ref{min_D2} and \ref{no_global}:
\begin{result}
The one-cluster regions stated in Result \ref{cont_BMM} are local minima, and they can be $O(D)$ worse than the optimal value.
\end{result}

So far, we have seen that {EM} is better than {GD} in mixtures of two components, in the sense that {EM} escapes one-cluster regions exponentially faster than {GD} in GMMs, and that {EM} escapes local minima that trap {GD} in BMMs. Empirically, we show that this is true in general. In Section \ref{exp}, we find that for BMMs with an arbitrary number of components and features, when we initialize the parameters randomly,  {EM} always converges to an $m$-cluster point, i.e., all clusters are used in the model. Comparably, {GD} converges to a $k$-cluster point with a high probability, where only some of the clusters are employed in fitting the data (i.e., $k<m$). This result, combined with our analysis for mixtures of two components, implies that {EM} can be more robust than {GD} in terms of avoiding certain types of bad local optima when learning mixture models.

\section{ANALYSIS NEAR ONE-CLUSTER REGIONS}\label{analysis}

In this section, we analyze the stability of {EM} and {GD} near one-cluster regions. Our theoretical results for EM and GD are verified empirically in Section \ref{exp}.

\subsection{Mixture of two Gaussians}\label{Sec: GMM}

We first consider mixtures of two Gaussians with identity covariance matrices, under the infinite-sample assumption. Due to translation invariance, we can choose the origin to be the midpoint of the two cluster means, such that: $\mmu_1^* = \mmu^*$ and $\mmu_2^* = -\mmu^*$. 

\subsubsection{{EM} algortihm}

As long as $\pi_1$ is sufficiently small, the following theorem shows that {EM} will increase $\pi_1$ and therefore will not converge to a one-cluster solution.  Recall from (\ref{update_EM}) that $\pi_1$ is multiplied by $Z_1$ at every step of {EM} and therefore we show that $Z_1>1$ almost everywhere, ensuring that $\pi_1$ will increase.
Since $\mmu_2$ converges to $\xm$ within one step, we only consider those points where $\mmu_2 = \xm$.

We can show that under mild assumptions,  {EM} escapes one-cluster regions exponentially fast:
\begin{theorem}\label{balance_EM}
Consider a mixture of two Gaussians with $\pi_1^*\in (0, 1)$, unit covariance matrices and true distribution
\be
p^*(\x) = \pi^*_1 \N(\x|\mmu^*, I) + \pi^*_2 \N(\x|-\mmu^*, I)
\en
When $\pi_1 = \epsilon$ is initialized to be small enough, $\mmu_2 = \xm$ and $\b^T \mmu^* \neq 0$, {EM} increases $\pi_1$ exponentially fast.
\end{theorem}
\begin{proof}
It suffices to show that $Z_1 > 1$ and $Z_1$ grows if initially $\b^T \mmu^* \neq 0$. To calculate $Z_1$, we first compute $\tilde{q}_1(\x) = p^*(\x)\gamma_1(\x)$:
\be\label{tilde_q_2}
\tilde{q}_1(\x) &=& \pi_1^* e^{\b^T(\mmu^* - \mmu_2)}\N(\x| \mmu^* +\b, I) \tr
&+& \pi_2^* e^{-\b^T(\mmu^* + \mmu_2)} \N(\x| -\mmu^* + \b, I).\tr
\en
This equation shows that $\tilde{q}_1(\x)$ corresponds to an unnormalized mixture of Gaussians with their means shifted by $b$ and their mixing coefficients rescaled in comparison to $p^*(\x)$.
If $\b = {\bf 0}$, then $\tilde{q}_1(\x) = p^*(\x)$. So, $\b$ describes how different $\tilde{q}_1(\x)$ deviates from $p^*(\x)$. The partition function can be computed by integrating out $\tilde{q}_1(\x)$:
\be
Z_1 =  \pi_1^* e^{\b^T(\mmu^* - \mmu_2)} + \pi_2^* e^{-\b^T(\mmu^* + \mmu_2)}.
\en
In fact, $\mmu_2 = \overline{\x} = (\pi_1^* - \pi_2^*)\mmu^*$ which can be derived from $p^*(\x)$. So, $Z_1$ becomes:
\be\label{Z_q}
Z_1 =  \pi_1^* e^{2\pi_2^* \b^T \mmu^*} + \pi_2^* e^{-2\pi_1^* \b^T \mmu^*}.
\en
Using the fact $e^x \geq 1 + x$ and that equality holds iff $x = 0$, we can show that when $\b^T \mmu^* \neq 0$, $Z_1 > 1$. 

Now, let us show that $Z_1$ increases. It suffices to prove that $|\b^T \mmu^*|$ increases. From \eqref{tilde_q_2} and \eqref{update_EM}, we have the update equation for $\mmu_1$: $\mmu_1 \leftarrow (\pi'_1 - \pi'_2)\mmu^* + \b$,
 with
\be\label{pi1prime}
\pi'_1 = \pi_1^* Z_1^{-1}  e^{2\pi_2^* \b^T \mmu^*}, \pi'_2 = \pi_2^* Z_1^{-1}  e^{-2\pi_1^* \b^T \mmu^*},
\en
If $\b^T \mmu^* > 0$, then $\pi'_1 > \pi_1^*$ and $\pi'_2 < \pi_2^*$. So, $\mmu_1^{(t+1)} = \mmu_1^{(t)} + \delta \mmu^*$ with $\delta = \pi'_1-\pi_1^* + \pi_2^* - \pi'_2> 0$, and
\be\label{bt+1}
(\b^{(t+1)})^T \mmu^* = (\b^{(t)})^T \mmu^* + \delta ||\mmu^*||_2^2 > (\b^{(t)})^T \mmu^*,
\en
where we use the superscript $(t+1)$ to denote the updated values and $(t)$ to denote the old values. 

Similarly, we can prove that $\b^T \mmu^* $ will decrease if $\b^T \mmu^* < 0$. Hence, from \eqref{Z_q}, $Z_1$ increases under {EM} if initially $\b^T \mmu^* \neq 0$.
\end{proof}
This theorem above can be extended to any mixture of two Gaussians with known fixed covariance matrices $\Sigma$ for both clusters, using the transformation $\mmu_i\to \Sigma^{-1/2} \mmu_i$ and $\mmu^* \to \Sigma^{-1/2}\mmu^*$. We will prove it formally in Appendix \ref{2-gmm-gen}.

One may wonder what happens if originally $\b^T \mmu^* = 0$. If we choose $\mmu_1$ randomly, $\b^T \mmu^* = 0$ happens with probability zero since the corresponding Lebesgue measure is zero. Moreover, in numerical calculation such points are extremely unstable and thus unlikely. A similar condition appears in balanced mixtures of Gaussians as well, e.g., Theorem 1 in \cite{xu2016global}.

\paragraph{Rotation} The proof of Theorem \ref{balance_EM} also shows an interesting phenomenon: $\mmu_1$ rotates towards $\mmu^*$ or $-\mmu^*$ (see Fig.~\ref{fig:rotation}). 
In fact, one can show that if $(\b^{(t)})^T \mmu^* > 0$, then:
\be
\frac{\langle \mmu_1^{(t+1)} , \mmu^* \rangle}{||\mmu_1^{(t+1)}|| \cdot ||\mmu^*|| }
\geq \frac{\langle \mmu_1^{(t)}, \mmu^* \rangle }{||\mmu_1^{t}|| \cdot ||\mmu^*||},
\en
where the equality holds iff $\mmu_1$ is a multiple of $\mmu^*$. A similar result can be shown for $(\b^{(t)})^T \mmu^* < 0$.

\begin{figure}
    \centering
    \includegraphics[width=6cm]{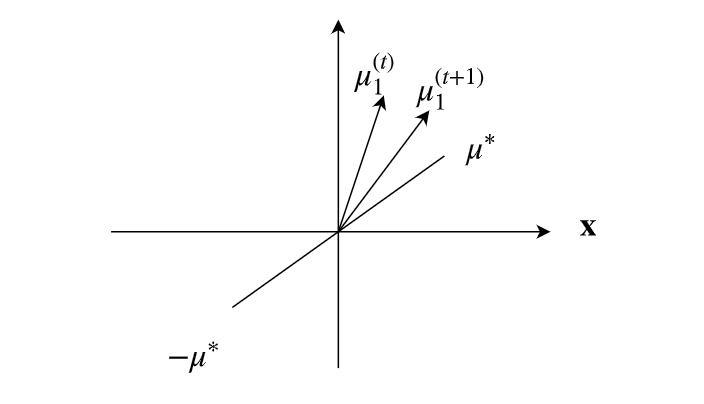}
    \caption{Illustration of an {EM} update as a rotation of $\mmu_1^{(t)}$ to $\mmu_1^{(t+1)}$ towards $\mmu^*$.}
    \label{fig:rotation}
\end{figure}

\paragraph{Effect of Separation} Another interesting observation is that one can understand how the degree of separation affects the escape rate. If $||\mmu^*||$ is small, from \eqref{Z_q}, $Z_1$ will be close to $1$. Also, from \eqref{pi1prime} and \eqref{bt+1}, $\b^T \mmu^*$ will change more slowly, adding to the effect of slowing down the escape rate.

\subsubsection{GD algorithm}\label{GD analysis}

Now, let us analyze the behavior of {GD} near one-cluster regions. After one {GD} iteration, the mixing coefficients become $ (\pi_1+\alpha Z_1, \pi_2+\alpha)$
before projection (where $\pi_2+\alpha Z_2 = \pi_2+\alpha$ since $Z_2=1$). Assume a small step size $\alpha$ such that $\pi_2 + \alpha > \pi_1 + \alpha Z_1$. After projection based on \eqref{gd_updates}, the updated mixing coefficients are:
\be\label{PGD_mix}
(\pi_1 + ({\alpha}/{2})(Z_1 - 1), \pi_2 - ({\alpha}/{2})(Z_1 - 1)).
\en
Combining with \eqref{update_EM}, whenever $Z_1 > 1$, both {EM} and {GD} increase $\pi_1$. In this sense, {EM} and {GD} achieve an agreement. However, in {GD} $\mmu_1$ changes little since the change is proportional to $\pi_1$. This argument is true for any mixture models with two components.

For GMMs, the update of $\mmu_2$ is:
\be\label{GMM_mu2}
\mmu_2 \leftarrow \mmu_2 + \alpha (\xm - \mmu_2).
\en
Hence, $\mmu_2^{(t)}$ is on the line segment between $\xm$ and $\mmu_2^{(0)}$, and $\mmu_2$ converges to $\xm$ at a linear rate $1-\alpha$. 

If $\mmu_2 \neq \xm$, it is possible to have $Z_1 < 1$. For example, take $D = 1$, $\mu_2 = 3$, $\mu_1 = 5$, $\pi_1^* = \pi_2^* = 1/2$ and $\mu^* = 2$, we have $Z_1 = (1/2)(e^{-10}+e^{-2}) < 1$. In such cases, $\pi_1$ could either increase or decrease.

In the worst case, $\pi_1$ stays small until $\mmu_2$ converges to $\xm$. Then, from \eqref{Z_q}, $Z_1 > 1$ given $\b^T \mmu^* \neq 0$. Assuming $\mmu_1$ changes little in this process, we can conclude that with probability one {GD} escapes one-cluster regions. However, the growth of $\pi_1$ is linear compared to {EM}. Therefore, we have the following theorem:
\begin{theorem}\label{balance_GD}
For mixtures of two Gaussians and arbitrary initialization $(\mmu_1, \mmu_2)$, there exists a $\pi_1$ small enough such that when the mixture is initialized at $(\pi_1, \mmu_1, \mmu_2)$, {GD} increases $\pi_1$ as a linear function of the number of time steps.
\end{theorem}
\begin{proof}
For $\pi_1$ small enough, $\mmu_1$ stays in a small neighborhood, and $\mmu_2$ converges to $\xm$ according to \eqref{GMM_mu2}. Around $\mmu_2 = \xm$, $Z_1$ stays close to a constant and $Z_1 \geq 1$. The linear growth of $\pi_1$ follows from \eqref{PGD_mix}. 
\end{proof}

Notice that \eqref{PGD_mix} is true for both GMMs and BMMs. The only difference is the computation of $Z_1$. From this equation, we know that if any one-cluster region traps {EM}, then $Z_1 < 1$ and $\mmu_2 = \xm$. On this configuration, for the {GD} algorithm, we know that $\pi_1$ decreases if it is not zero, and $\mmu_1, \mmu_2$ stays the same. Hence, this region traps {GD} as well. So, the following theorem holds:

\begin{theorem}\label{trap_EM_trap_GD}
For two-component mixture models, if any one-cluster region is stable for {EM}, it is stable for GD.
\end{theorem}
\begin{proof}
Near any one-cluster region that is stable for {EM}, $\mmu_2 = \xm$ and from \eqref{update_EM}, $Z_1 < 1$. Also, $\mmu_1$ changes very little since the update is proportional to $\pi_1$. So, $Z_1$ is a constant and $\pi_1$ decreases linearly according to \eqref{PGD_mix}. The parameters stop changing at $\pi_1 = 0$.
\end{proof}


\subsection{Mixture of two Bernoullis}\label{BMM2}

Now, let us see if Bernoulli mixtures have similar results. With the {EM} algorithm, $\mmu_2 = \overline{\x}$ can be easily obtained. From the definition of $\tilde{q}_c(\x)$, we have:
\be\label{qtildeBMM}
\tilde{q}_1(\x) &=& p^*(\x)\frac{B(\x|\mmu_1)}{B(\x|\xm)} \tr
& =& \pi_1^* \prod_{i=1}^D \left( \mu_{1i}^* \frac{\mu_{1i}}{\overline{x}_i} \right)^{x_i} \left( (1-\mu_{1i}^*) \frac{1-\mu_{1i}}{1-\overline{x}_i} \right)^{1-x_i}\tr
&+&\pi_2^* \prod_{i=1}^D \left( \mu_{2i}^* \frac{\mu_{1i}}{\overline{x}_i} \right)^{x_i} \left( (1-\mu_{2i}^*) \frac{1-\mu_{1i}}{1-\overline{x}_i} \right)^{1-x_i}.\tr
\en
Consider the unormalized binary distribution $B(x|\mu'_1, \mu'_2) = (\mu'_1)^x (\mu'_2)^{1-x}$ with $\mu'_1 + \mu'_2 \neq 1$.  Then the normalization factor (partition function) is $\mu'_1 + \mu'_2$. With this fact, one can derive that
\be
Z_1 &=& \pi_1^* \prod_{i=1}^D \left( \mu_{1i}^* \frac{\mu_{1i}}{\overline{x}_i}  + (1-\mu_{1i}^*) \frac{1-\mu_{1i}}{1-\overline{x}_i} \right)\tr
&+& \pi_2^* \prod_{i=1}^D \left( \mu_{2i}^* \frac{\mu_{1i}}{\overline{x}_i}  + (1-\mu_{2i}^*) \frac{1-\mu_{1i}}{1-\overline{x}_i} \right)\tr
\en
From the intuition of Gaussian mixtures, we similarly define $2{\bf \mmu}^* = \mmu_1^* - \mmu_2^*$ as the separation between the two true cluster means. As usual, ${\bf b}$ describes the separation between $\mmu_1$ and $\mmu_2$. With the notations above, we obtain:
\be
\label{part_func}
Z_1 = \pi_1^* \prod_{i=1}^D \left( 1 + \pi_2^* \lambda_i \right) +  \pi_2^* \prod_{i=1}^D \left( 1 - \pi_1^* \lambda_i \right),
\en
with
\be\label{lambda}
\lambda_i = 2 S_i^{-1}{\mu^*_i}b_i\mbox{ and }S_i ={\rm var}[x_i] = \overline{x}_i (1 - \overline{x}_i).
\en
Approximating $1+x\sim e^x$ and taking the covariance matrix to be the identity, we retrieve GMM (see \eqref{Z_q}). 

The update of ${\bm \l}$ can be computed from \eqref{update_EM}:
\be\label{update_EM_gen}
M({\bm \lambda})_i = \l_i + (2S_i^{-1}\mu_i^*)^2 \pi_1^* \pi_2^* \frac{\Lambda_i}{Z_1}(B_{1i}-B_{2i}),
\en
where we denote $\Lambda_i = \mu_{1i}(1-\mu_{1i})$ and 
\be
\label{B1iB2i}B_{1i} :=\prod_{j\neq i}^D (1+\pi_2^* \lambda_j), B_{2i} :=\prod_{j\neq i}^D (1-\pi_1^* \lambda_j).
\en
The derivation of \eqref{update_EM_gen} is presented in Appendix \ref{315}.
 Notice that each $\lambda_i$ is an affine transformation of $\mu_{1i}$, as defined by \eqref{lambda}. Hence, the domain of ${\bm \l}=(\l_1, \dots, \l_D)$ is a Cartesian product of closed intervals. We assume by default that ${\bm \l}$ is in the domain. Studying the update of ${\bm \lambda}$ instead of $\mmu_1$ is more convenient, as we will see in the following subsections that ${\bm \l}$ plays an important role in expressing the convergence results. 
 
 Also, we denote the covariance between features $i$ and $j$:
\be
\sigma_{ij} = \E[x_i x_j] - \E[x_i]\E[x_j] = 4\pi_1^* \pi_2^* \mu_i^* \mu_j^*.
\en
We consider the case when there are no independent pairs, i.e., $\sigma_{ij}\neq 0$ for any $i\neq j$. 

\subsubsection{Attractive one-cluster regions for GD}

 When $Z_1 < 1$ and $\mmu_2 = \xm$, {GD} will get stuck in one-cluster regions, as shown in the proof of Theorem \ref{trap_EM_trap_GD}. If $\mmu_2 \neq \xm$, $\mmu_2$ converges to $\xm$ at a linear rate:
 \be
 \mmu_2 \leftarrow \mmu_2 + \alpha {\bm \Sigma}_2^{-1}(\xm - \mmu_2),
 \en
 as can be seen from \eqref{GD_mu}. Hence, we assume that $\mmu_2$ has already converged to $\xm$.  With \eqref{part_func}, we can define attractive one-cluster regions for GD:
 \begin{proposition}[\textbf{attractive one-cluster regions for GD}]\label{trap_GD}
 Denote $T := \{{\bm \l}| Z_1({\bm \l}) < 1\}$.
 The one-cluster regions $(\pi_1, \mmu_1, \mmu_2)$ with ${\bm \l}(\mmu_1)\in T$, $\mmu_2 = \xm$ and $\pi_1 = 0$ are attractive for {GD}, given small enough step size. Here, we denote ${\bm \l}(\mmu_1)$ as an affine function of $\mmu_1$, in the form of \eqref{lambda}.
 \end{proposition}
 
 For example, when $D = 2$, from \eqref{part_func}, we have $Z_1 = 1 + \pi_1^* \pi_2^* \lambda_1 \lambda_2$. If $\lambda_1 \lambda_2 < 0$, then $Z_1 < 1$, and {GD} will get stuck. If $\lambda_1 \lambda_2 > 0$, $Z_1 > 1$, and then {GD} will escape one-cluster regions. In the latter case, ${\bm \l}\in \R_{++}^2$ or  ${\bm \l}\in -\R_{++}^2$.

\subsubsection{Positive regions}\label{pos_reg}

Now, let us study the behavior of {EM}. We will show that {EM} escapes one-cluster regions when ${\bm \l} \in \R_{++}^2 \cup (-\R_{++}^2)$, for the two-feature case. More generally with $D$ features, we should consider ${\bm \l} \in \R_{++}^D \cup (-\R_{++}^D)$.

In the field of optimization, both $\R^D_{++}$ and $-\R^D_{++}$ are known as proper cones and a relevant order can be defined. They are critical in our analysis, so, we define these two cones as positive regions:
\begin{definition}[\textbf{positive regions}]
A positive region of ${\bm \l}$ is defined to be one of $\{\R^D_{++}, -\R^D_{++}\}$. We denote it as
$P := \R^D_{++} \cup (-\R^D_{++})$.
\end{definition}

The positive regions are interesting because $\R^D_{++}$ ensures that each element $\mmu_1 - \mmu_2$ has the same sign as $\mmu_1^* - \mmu_2^*$ and $-\R^D_{++}$ ensures that each $\mmu_1 - \mmu_2$ has the opposite sign of $\mmu_1^* - \mmu_2^*$.

\begin{theorem}\label{pos_Z}
For any number of features, $Z_1({\bm \l}) > 1$ in the positive regions. 
\end{theorem}
\begin{proof}
From \eqref{part_func},
\be
\nabla Z_1({\bm \l}) = \pi_1^* \pi_2^* \left( \prod_{i=1}^D \left( 1 + \pi_2^* \lambda_i \right) - \prod_{i=1}^D \left( 1 - \pi_1^* \lambda_i \right)\right).\nonumber
\en
So, $\nabla Z_1({\bm \l})\succ 0$ if ${\bm \l} \succ 0$. Denote $g(t) = Z_1(t{\bm \l})$, then $Z_1({\bm \l}) - 1 = g(1) - g(0) =\int_0^1 {\bm \l}^T \nabla Z_1(t{\bm \l}) dt > 0$, given ${\bm \l} \succ 0$. we can similarly prove $Z_1({\bm \l}) > 1$ given ${\bm \l} \prec 0$. 
\end{proof}

We also notice that positive regions are stable. Define $M({\bm \l})$ to be the {EM} update of ${\bm \l}$ based on the {EM} update of $\mmu_1$ and \eqref{lambda}.  For any number of features, the following lemma holds:

\begin{lemma}[\textbf{stability of positive regions}]\label{stable_pos}
For any number of features, given $\pi_1 = \epsilon$ small, the two positive regions $\R_{++}^D$ and $-\R_{++}^D$ are stable for {EM}, i.e., $M({\bm \lambda}) \succ {\bm \lambda}$ for all ${\bm \lambda} \succ 0$ and $M({\bm \lambda}) \prec {\bm \lambda}$ for all ${\bm \lambda} \prec 0$.
\end{lemma}
\begin{proof}
If ${\bm \l} \succ 0$, then $B_{1i} > B_{2i}$ for any $i$ (based on \eqref{update_EM_gen} and \eqref{B1iB2i}), and thus $M({\bm \l}) \succ {\bm \l}$.  A similar argument demonstrates that $M({\bm \l}) \prec {\bm \l}$ when ${\bm \l} \prec 0$.
\end{proof}

From the lemma above, we conclude that in the positive regions, $|\b^T \mmu^*|$ will increase, similar to  Theorem \ref{balance_EM}. Lemma \ref{stable_pos} and Theorem \ref{pos_Z} lead to the following:

\begin{corollary}\label{pos_cor}
For any number of features, if ${\bm \l}$ is initialized in a positive region, then {EM} will escape one-cluster regions exponentially fast.  \end{corollary} 
\begin{proof}
WLOG, we assume ${\bm \l}\succ 0$. From Theorem \ref{pos_Z}, at positive regions, $Z_1 > 1$. Define $g(t) = Z_1({\bm \l} + t (M({\bm \l}) - {\bm \l}))$. $Z_1$ increases after an {EM} update because:
\be
&&Z_1( M({\bm \l}) ) - Z({\bm \l}) = g(1) - g(0) \tr &=& \int_0^1 (M({\bm \l}) - {\bm \l})^T \nabla Z_1({\bm \l} + t (M({\bm \l}) - {\bm \l})) dt > 0,\nonumber
\en
where we use $M({\bm \lambda}) \succ {\bm \l}$ from Lemma \ref{stable_pos} and $\nabla Z_1({\bm \l})\succ 0$ if ${\bm \l} \succ 0$ from the proof of Theorem \ref{pos_Z}. From \eqref{update_EM}, $\pi_1$ increases exponentially. 
\end{proof}

With {GD}, we have similar results as Corollary \ref{pos_cor}, but {GD} escapes one-cluster regions much more slowly, which can be derived in a similar way as in Section \ref{Sec: GMM}.

 What if ${\bm \l}$ is not initialized in positive regions? The following theorem tells us that if $D=2$, no matter where ${\bm \l}$ is initialized, {EM} will almost always escape the one-cluster regions.

\begin{theorem}\label{EM_GD_22}
For $m = D=2$, given $\sigma_{12} \neq 0$ and $\xm \in (0, 1)^D$, with the {EM} algorithm, $\pi_1 = \epsilon$, $\mmu_2 = \xm$ and uniform random initialization for $\mmu_1$, ${\bm \l}$ will converge to the positive regions at a linear rate with probability $1$.  Therefore, {EM} will almost surely escape one-cluster regions. 
\end{theorem}

\begin{proof}
Here, we give a proof sketch, and the more detailed proof of Theorem \ref{EM_GD_22} can be found in Appendix \ref{sec: A.2}.
First, 
in the worst case, $||{\bm \l}||$ shrinks to a neighborhood of the origin. Then, near the origin, {EM} rotates ${\bm \l}$ towards positive regions. 
\end{proof}
For general $m$ and $D$, we conjecture that {EM} almost always escapes $k$-cluster regions where $k < m$, as described in Appendix \ref{conj}.

\subsubsection{EM as an ascent method}
So far, we have studied positive regions $P$. We showed some nice properties of $P$ and proved that ${\bm \l}$ converges to $P$ almost everywhere when $D = 2$. In this section, we show that {EM} can be treated as an ascent method for $Z_1$:

\begin{theorem}\label{EM_AS_AS}
For ${\bm \lambda}$ in the feasible region, we have $\nabla Z_1({\bm \lambda})^T (M({\bm \lambda}) - {\bm \lambda}) \geq 0$, with equality holds iff ${\bm \lambda} = 0$. 
\end{theorem}
\begin{proof}
From \eqref{part_func}, $\nabla_i Z_1 ({\bm \l}) = \pi_1^* \pi_2^* (B_{1i} - B_{2i})$.
So, from \eqref{update_EM_gen}, for any $i$, $\nabla_i Z_1 ({\bm \l})(M({\bm \l})_i - \l_i)$ is nonnegative. Equality holds iff for all $i\in [D]$, $B_{1i} = B_{2i}$. It follows that for any $i$, $1 +\pi_1^* \lambda_i = 1 - \pi_2^* \lambda_i$. Hence, equality holds iff ${\bm \lambda} = 0$.
\end{proof}

With this theorem, we can take a small step in the {EM} update direction: ${\bm \l} + \alpha (M({\bm \l}) -{\bm \l})$, with $\alpha$ small. In this way, we can always increase $Z_1$ until $Z_1 > 1$ and {EM} escapes the one-cluster regions. 

\subsection{EM vs GD}
Now, we are ready to prove one of the main results in our paper: for mixtures of two Bernoullis, there exist one-cluster regions that can trap {GD} but not {EM}. We first prove a lemma similar to Lemma \ref{stable_pos}. 
\begin{lemma}\label{boundary_lemma32}
For all ${\bm \l} \neq 0$ and ${\bm \l} \succeq 0$, $M({\bm \lambda})\succ 0$ and $M({\bm \l})\succeq {\bm \l}$. Similarly, for all ${\bm \l} \neq 0$ and ${\bm \l} \preceq 0$, $M({\bm \lambda})\prec 0$ and $M({\bm \l})\preceq {\bm \l}$.
\end{lemma}
\begin{proof}
WLOG, we prove the lemma for ${\bm \l} \succeq 0$. Notice that for any ${\bm \l} \succeq 0$, $M({\bm \l})\succeq {\bm \l}$ can be read directly from \eqref{update_EM_gen}. For $\l_i = 0$, $B_{1i} > B_{2i}$ since ${\bm \l} \neq 0$. Hence, $M({\bm \l})\succ 0$.
\end{proof}
This lemma leads to a main result in our work: for mixtures of two Bernoullis, there exist one cluster regions with nonzero measure that trap {GD}, but not {EM}.
\begin{theorem}\label{unstable_proof}
For mixtures of two Bernoullis, given $\pi_1^* \in (0, 1)$, $\xm \in (0, 1)^D$ and $||\mmu^* ||_0 = D$, there exist one-cluster regions $B$ that trap {GD}, but {EM} escapes such one-cluster regions exponentially fast.
\end{theorem}

\begin{proof}
Denote ${\bf e}_i$ as the $i^{\rm th}$ unit vector in the standard basis. For any $i\in [D]$, at ${\bm \l} = {\l}_i {\bf e}_i$ and $\l_i > 0$, from Lemma \ref{boundary_lemma32} and Theorem \ref{pos_Z}, we have $M({\bm \l}) \succ 0$ and $Z_1(M({\bm \l})) > 1$. Notice also that $Z_1({\bm \l}) = 1$ and $\nabla_j Z_1({\bm \l}) > 0$ for all $j \neq i$. Therefore, moving in the opposite direction of the gradient, it is possible to find a neighborhood $B$ of ${\bm \l}$, such that for all ${\bm \l}'\in B$, $Z_1({\bm \l}') < 1$ and $Z_1(M({\bm \l}')) > 1$. This region traps {GD} due to Proposition \ref{trap_GD}, and {EM} escapes such one-cluster regions exponentially fast due to Corollary \ref{pos_cor}.
\end{proof}

\section{ONE-CLUSTER LOCAL MINIMA }\label{struc_res}

It is possible to show that some one-cluster regions are local minima, as shown in the following theorem:

\begin{theorem}\label{min_D2}
The attractive one-cluster regions for {GD} defined in Proposition \ref{trap_GD} are local minima of the cross entropy loss $\ell = -\E [\log \left( \pi_1 B(\x|\mmu_1) + \pi_2 B(\x|\mmu_2)\right)]$. 
\end{theorem}

\begin{proof}
Impose $\pi_2 = 1 - \pi_1$ and treat $\ell$ as a function of $(\pi_1, \mmu_1, \mmu_2)$. Consider any small perturbation $(\delta\pi_1, \delta \mmu_1, \delta \mmu_2)$. If $\delta\pi_1 = 0$, then the loss will not decrease as $\mmu_2 = \xm$ is a local minimum of $-\E [\log  B(\x|\mmu_2)]$. If $\delta\pi_1 \neq 0$, the change of $\ell$ is determined by the first order. Only ${\partial \ell}/{\partial \pi_1}$ is nonzero, so the change is dominated by the first order:
\be
\frac{\partial \ell}{\partial \pi_1} \delta \pi_1 = (1 - Z_1|_{\mmu_1 = \mmu_1 + \delta \mmu_1})\delta \pi_1 > 0,
\en
when $(\delta\pi_1, \delta \mmu_1, \delta \mmu_2)$ is small enough. Therefore, we conclude that $(\pi_1, \mmu_1, \mmu_2)$ is a local minimum.
\end{proof}

We call such minima one-cluster local minima. The following theorem shows that they are not global minima.

\begin{theorem}\label{no_global}
Assume $\mmu^* \neq 0$ and $\pi_1^* \in (0, 1)$. For mixtures of two Bernoullis, one-cluster local minima cannot be global. The gap between the one-cluster local minima and the global minimum could be as large as ${\Theta}(D)$.
\end{theorem}

\begin{proof}
The global minimum is obtained when $p(\x) = p^*(\x)$. Denote the optimal value as $\ell^*$. We have:
\be
\ell^* -\ell_1 = -{\rm KL}(p^*(\x)|\prod_{i=1}^D p^*(x_i)) \leq 0,
\en
with ${\rm KL}(p||q)$ the Kullback-Leibler divergence. Equality holds iff $p^*(\x) = \prod_{i=1}^D p^*(x_i)$, which means that the features are mutually independent. A direct consequence is that the features are pairwise independent. Nevertheless, we assumed at the end of Section~\ref{BMM2} that $\sigma_{ij} \neq 0$ for any pair $(i, j)$. This is a contradiction.

Moreover, the difference could be very large. For example, taking $\pi_1^* = 1/2$, $\mmu_1^* = {\bf 1}$ and $\mmu_2^* = 0$, the difference is $\ell^* -\ell_1 = -(D-1)\log 2$.
\end{proof}

\section{EXPERIMENTS}\label{exp}


We analyzed the differences in behavior between {EM} and {GD} in Section \ref{analysis} for mixtures of two components. For Gaussian mixtures with two components, we have done some experiments for the $D=2$ case with $10,000$ samples to approximate the population case, which can be shown in Figure \ref{fig:gmm}. From this figure, we see that {EM} increases $\pi_1$ exponentially and {GD} increases $\pi_1$ linearly, when we initialize near a one-cluster region. 

\begin{figure}
    \centering
    \includegraphics[width=6cm]{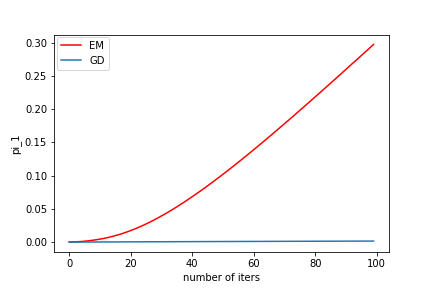}
    \caption{Initialized from $\pi_1 = 10^{-4}$, EM escapes the $1$-cluster point exponentially while GD escapes linearly with step size $0.01$, achieving $0.0015$ in $100$ steps.}
    \label{fig:gmm}
\end{figure}

For Bernoulli mixtures, we aim to understand, at least empirically, how often {EM} and {GD} converge to a $k$-cluster point with $k < m$ with an arbitrary number of components. Since $k$-cluster points only utilize some of the components, this typically implies that the fixed points are bad. We will show that {EM} never converges to such bad critical points, whereas {GD} does.


We experiment with the infinite sample case for mixtures of Bernoullis of $m$ components with $D$ binary variables, where both $m$ and $D$ take values in $\{2,3,4,5,6\}$ for a total of 25 combinations. For each combination $(m,D)\in\{2,3,4,5,6\}^2$, We run (i) {EM} until convergence with at most $20000$ iterations; (ii) {GD} until convergence with a step size of $0.02$ and at most $10000$ iterations, for $60$ times each from random initialization. We find experimentally that {EM} always converges to an $m$-cluster point that is reasonably good compared to the optimal value (greater than $0.999$ in terms of the likelihood ratio). On the other hand, {GD} is much more likely to converge to a $k$-cluster point with $k<m$, especially as $m$ or $D$ increase. Table \ref{table1} summarizes our empirical findings for GD.

\begin{table}[ht]
\centering
\begin{tabular}{c|ccccccc}
\hline
& $D=2$ & $D=3$ & $D=4$ & $D=5$ & $D=6$ \\
\hline
$m=2$& 21.7 & 13.3 & 10.0 & 21.7 & 18.3 \\
$m=3$& 46.7 & 15.0 & 56.7 & 26.7 & 26.7 \\
$m=4$& 18.3 & 26.7 & 30.0 & 36.7 & 48.3 \\
$m=5$& 48.3 & 56.7 & 50.0 & 58.3 & 63.3 \\
$m=6$& 35.0 & 46.7 & 68.3 & 58.3 & 53.3 \\
\hline
\end{tabular}
\caption{Average fraction of times (as a percentage) when {GD} converges to a $k$-cluster point with $k<m$ from random initialization. Infinite sample case. }
\label{table1}
\end{table}

We also experiment with datasets of $2000$ samples generated from mixtures of Bernoullis of $m$ components with $D$ binary variables. We choose the same hyperparameters as in the previous experiment for {EM} and {GD}.  As before, {EM} always converges to $m$-cluster points. The results for {GD} are summarized in Table \ref{table2}. From this table, we see the similarity between infinite sample and finite sample cases. Hence, it is possible to extend our analysis for the population likelihood to a likelihood with a finite number of samples.

\begin{table}[ht]
\centering
\begin{tabular}{c|ccccccc}
\hline
& $D=2$ & $D=3$ & $D=4$ & $D=5$ & $D=6$\\
\hline
$m=2$& 10.0 &  5.0 & 0.0 & 13.3 & 10.0  \\
$m=3$& 10.0 & 13.3 & 21.6 & 36.7 & 40.0 \\
$m=4$& 33.3 & 35.0 & 10.0 & 36.7 & 56.7 \\
$m=5$& 33.3 & 46.7 & 25.0 & 40.0 & 70.0 \\
$m=6$& 43.3 & 50.0 & 58.3 & 60.0 & 80.0 \\
\hline
\end{tabular}
\caption{Average fraction of times (as a percentage) when {GD} converges to a $k$-cluster point with $k<m$ from random initialization. Finite sample case.}
\label{table2}
\end{table}


A remaining question is about the quality of the $k$-cluster points that {GD} converges to. Denote the loss of the $k$-cluster point that {GD} converges to as $\ell_{\rm GD}$, and the optimal loss as $\ell^*$. In Table \ref{table3}, we compute $\exp(\ell^* - \ell_{\rm GD})$ as the ratio of the likelihood of the $k$-cluster points that {GD} converges to, to the optimal likelihood. We randomly choose 10 true distributions, compute the ratio and take the average, as shown in Table \ref{table3}. We also take the worst-case ratio of the 10 true distributions, summarized in Table \ref{table4}. From these two tables, we see that the $k$-cluster points become worse as $D$ increases. This agrees with our intuition for one-cluster points in Theorem \ref{no_global}. Also, with more clusters, the ratio is larger since it is easier to fit the data, especially when $D$ is small, and thus a $k$-cluster point may still behave well in this case.

\begin{table}[ht]
\centering
\begin{tabular}{c|ccccccc}
\hline
& $D=4$ & $D=5$ & $D=6$ & $D=8$ & $D = 10$ \\
\hline
$m=2$ & 95.9 & 89.2 & 91.5 & 75.2 & 72.0\\
$m=3$ & 98.7 & 97.9 & 97.1 & 91.9 & 83.4 \\
$m=4$ & 99.5 & 98.1 & 99.1 & 95.1 & 88.4 \\
$m=5$ & 99.9 & 99.8 & 98.9 & 94.9 & 90.3\\
$m=6$ & 100.0 & 99.7 & 99.6 & 95.9 & 92.6\\
\hline
\end{tabular}
\caption{Average ratio of the likelihood (as a percentage) of the $k$-cluster points ($k < m$) that {GD} converges to, to the optimal likelihood. Infinite sample case.}
\label{table3}
\end{table}

\begin{table}[ht]
\centering
\begin{tabular}{c|ccccccc}
\hline
& $D=4$ & $D=5$ & $D=6$ & $D=8$ & $D = 10$ \\
\hline
$m=2$ & 83.3 & 74.3 & 67.7 & 59.1 & 52.7\\
$m=3$ & 91.3 & 91.3 & 82.6 & 73.2 & 60.6 \\
$m=4$ & 90.8 & 83.1 & 92.0 & 83.3 & 71.5 \\
$m=5$ & 96.2 & 94.4 & 94.5 & 80.1 & 82.3\\
$m=6$ & 98.6 & 98.4 & 97.1 & 94.5 & 70.3\\
\hline
\end{tabular}
\caption{Worst-case ratio of the likelihood (as a percentage) of the $k$-cluster points ($k < m$) that {GD} converges to, to the optimal likelihood. Infinite sample case.}
\label{table4}
\end{table}

\section{CONCLUSIONS}\label{sec:conclusion}

In this paper, we identified $k$-cluster regions and studied one-cluster regions in mixture models of two components carefully. For mixtures of two Gaussians, when initialized around one-cluster regions, {EM} escapes such regions exponentially faster than {GD}, as also shown in experiments. For mixtures of two Bernoullis, 
we proved that there exist one-cluster local minima that always trap {GD}, but {EM} can escape such minima. For Bernoulli mixtures with a general number of components and any number of features, we showed experimentally that with random initialization, {EM} always converges to an $m$-cluster point where all components are used, but {GD} often converges to a $k$-cluster point where $k < m$. This means that only a subset of the components are employed. 

Our work opens up a new direction of research. It would be interesting to know theoretically if {EM} almost always escapes $k$-cluster regions in mixture models. A conjecture for BMMs is given in Appendix \ref{conj}.

\subsubsection*{Acknowledgements}
Guojun would like to thank Stephen Vavasis and Yaoliang Yu for useful discussions. 

\bibliographystyle{unsrt}

\appendix


\section{Proofs for mixtures of Bernoullis}\label{Append_B}

\subsection{Derivation of \eqref{update_EM_gen}}\label{315}
From \eqref{qtildeBMM}, the update of $\mmu_1$ is:
\be
M(\mmu_1)_i = Z_1^{-1}\int \tilde{q}_1(\x) \x d\x = \frac{\mu_{1i}}{\overline{x}_i} Z_1^{-1}F_i,
\en
where $F_i = \pi_1^* \mu_{1i}^* B_{1i} + \pi_2^* \mu_{2i}^* B_{2i}$ and $B_{1i}, B_{2i}$ defined in \eqref{B1iB2i}. So, 
\be\label{appendA: diff}
M({\bm \l})_i - {\l}_i &=& 2S_i^{-1}\mu_i^* (M(\mmu_1)_i - \mu_{1i}) \tr
&=&2S_i^{-1}\mu_i^* \mu_{1i} \overline{x}_i^{-1} Z_1^{-1} (F_i - Z_1 \overline{x}_i) .\tr
\en
Bringing in the definition of $Z_1$ in \eqref{part_func}, we have 
\be
F_i - Z_1 \overline{x}_i &=& \pi_1^* B_{1i} (\mu_{1i}^* - \overline{x}_i (1 + \pi_2^* \l_i)) + \tr
&+& \pi_2^* B_{2i} (\mu_{2i}^* - \overline{x}_i (1 - \pi_1^* \l_i)).
\en
With the definitions of $\overline{x}_i$, $\mu_i^*$ and $\l_i$, we obtain:
\be
F_i - Z_1 \overline{x}_i = 2\pi_1^* \pi_2^* \mu_i^* (1-\overline{x}_i)^{-1}(1-\mu_{1i})(B_{1i}-B_{2i}),\nonumber
\en
which, combined with \eqref{appendA: diff}, yields \eqref{update_EM_gen}.

\subsection{Proof of Theorem \ref{EM_GD_22}}\label{sec: A.2}
In this section, we prove the following theorem:

\renewcommand{\thetheorem}{4.4}
\begin{theorem}
For $m = D=2$, given $\sigma_{12} \neq 0$ and $\xm \in (0, 1)^D$, with EM algorithm, $\pi_1 = \epsilon$, $\mmu_2 = \xm$ and uniform random initialization for $\mmu_1$, ${\bm \l}$ will converge to the positive regions at a linear rate with probability $1$.  Therefore, EM will almost surely escape one-cluster regions.
\end{theorem}

We assume that $\sigma_{12} > 0$ because the $\sigma_{12} < 0$ can be similarly proved by relabeling $x_2 \to 1 - x_2$. The theorem is equivalent to showing that $\b$ converges to the regions where $b_1 b_2 > 0$, due to \eqref{lambda} and Corollary \ref{pos_cor}. We also call these regions as positive regions. It is not hard to derive from \eqref{qtildeBMM} and \eqref{update_EM} that the EM update is:
\be
\label{diff_1} && b_1 \leftarrow b_1 + Z_1^{-1}\sigma \Lambda_1 b_2,\\
\label{diff_2} && b_2 \leftarrow b_2 + Z_1^{-1}\sigma \Lambda_2 b_1,
\en
with $\Lambda_i = \mu_{1i}(1-\mu_{1i})$. 

We first notice some properties of $\sigma_{12}$, as can be easily seen from its definition. For convenience, in the following proof we define $\sigma := \sigma_{12}S_1^{-1} S_2^{-1}$ which we call the normalized covariance.

\begin{lemma}\label{B_triv}
If $\sigma_{12} > 0$, then $\sigma_{12} < \overline{x}_1 (1 - \overline{x}_2) $, $\sigma_{12} < \overline{x}_2 (1 - \overline{x}_1)$.
\end{lemma}
\begin{proof}
Trivial from the definition of $\sigma_{12}$. 
\end{proof}

A direct consequence is:
\begin{corollary}\label{B_square}
If $\sigma_{12} > 0$, then $\sigma^2 S_1 S_2 < 1$.
\end{corollary}

In the following lemma, we show that in a neighborhood of the origin, $\b$ almost always converges to the positive regions.

\begin{lemma}[\textbf{Convergence with small $||\b||$ initialization, two features}]\label{small_b_2}
Assume $\sigma_{12} > 0$. $\exists \delta > 0$ small enough, with a random $\b$ initialized from the $L_1$ ball $||\b||_1 < \delta$ and the update function defined by {EM}, $\b$ converges to the positive regions $\{(b_1, b_2)|b_1 b_2 > 0\}$ at a linear rate. 
\end{lemma}

\begin{proof}
In this case, $\Lambda_i$ and $Z$ are roughly constant, and
\be
\b' = \begin{bmatrix}
b'_1 \\
b'_2
\end{bmatrix} = {\bf A(\b)}\b  = \begin{bmatrix}
1 & \sigma_{12} \Sigma^{-1}_2\\
\sigma_{12} S_1^{-1} & 1
\end{bmatrix} 
\begin{bmatrix}
b_1 \\
b_2
\end{bmatrix}. 
\en
The eigensystem of ${\bf A(\b)}$ is:
\be
\lambda_1 = 1 + \sigma_{12}\sqrt{S_1^{-1} S_2^{-1}}, \, {\bf v}_1 = (\sqrt{\sigma_{12} S_2^{-1}}, \sqrt{\sigma_{12} S_1^{-1}}).\tr
\lambda_2 = 1 - \sigma_{12}\sqrt{S_1^{-1} S_2^{-1}}, \, {\bf v}_2 = (-\sqrt{\sigma_{12} S_2^{-1}}, \sqrt{\sigma_{12} S_1^{-1}}).\nonumber
\en
Applying ${\bf A}(\b)$ for enough number of times, $\b$ will converge to a multiple of ${\bf v}_1$, where $\sigma b_1 b_2 > 0$. 

Let us make the argument above more concrete. Expand $\b$ as:
\be
\b = c_1 {\bf v}_1 + c_2 {\bf v}_2.
\en
Since $c_1 = 0$ is a measure zero set, we have $c_1 \neq 0$ almost everywhere. WLOG, we assume $c_1 > 0$. Denote
\be
{\bf A}(\b) &=& \begin{bmatrix}
1 & \sigma Z^{-1}\Lambda_1 \\
\sigma Z^{-1}\Lambda_1 & 1 
\end{bmatrix}, \\
{\bf A}(0) &=& \begin{bmatrix}
1 & \sigma_{12}S_2^{-1} \\
\sigma_{12}S_1^{-1} & 1 
\end{bmatrix}.
\en

We first prove that each element of ${\bf A}(\b) - {\bf A}(0)$ is bounded. This can be done by noticing:
\be
{\bf A}(\b) - {\bf A}(0) = \begin{bmatrix}
0 & \sigma (Z^{-1}\Lambda_1 - S_1) \\
\sigma (Z^{-1}\Lambda_2 - S_2) & 0 
\end{bmatrix},\nonumber
\en
and that
\be
Z^{-1}\Lambda_i - S_i = Z^{-1}(b_i(1-2\overline{x}_i) - b_i^2 - \sigma b_1 b_2 S_i).\nonumber
\en
So, ${\bf A}(\b) - {\bf A}(0) = O(\delta)$ in $||\b||_1 < \delta$. From this fact, one can show in $||\b||_1 < \delta$,
\be\label{Abb}
{\bf A}(\b)\b \succeq {\bf A}(0)\b - c\delta^2 {\bf v}_1,
\en
with $c$ some constant. WLOG, we assume $\b$ is still in the negative region and thus $||\b||_1$ decreases, so our approximation is still valid. Applying EM for $k$ times and by use of \eqref{Abb}, we know the result is at least (the generalized inequality is defined by the positive cone $\R^2_{++}$, see, e.g., \cite{conop})
\be
&& {\bf A}(0)^k \b - c\delta^2 (\lambda_1^{k-1} + \dots + \lambda_1 + 1){\bf v}_1\tr
&=&(c_1 \lambda_1^k - c\delta^2 \frac{\lambda_1^k - 1}{\lambda_1 - 1}){\bf v}_1 + c_2 \lambda_2^k {\bf v}_2.
\en
In the above analysis we used ${\bf A}(0) {\bf u} \succeq 0$ for ${\bf u} \succeq 0$. For $\delta$ small enough, we know almost surely $c_1 >  {c\delta^2}/({\lambda_1 - 1})$. Therefore, at almost everywhere ${\bf A}(\b)^k \b$ converges to the positive regions at a linear rate. 

The choice of $L_1$ ball is irrelevant, since in finite vector space all $L_p$ norms are equivalent.
\end{proof}

Now, let us show that in the worst case $\b$ shrinks to a neighborhood of the origin. Hence, combined with Lemma \ref{small_b_2}, we finish the proof. First, rewrite \eqref{diff_1} and \eqref{diff_2} as:
\be
b_1 \leftarrow Z^{-1}(b_1 + \sigma S_1 b_2 + \sigma (1-2\overline{x}_1)b_1 b_2),\\
b_2 \leftarrow Z^{-1}(b_2 + \sigma S_2 b_1 + \sigma (1-2\overline{x}_2)b_1 b_2).
\en

The two contours $b'_1 = 0, b'_2 = 0$ are respectively:
\be
&&C_{b'_1 = 0}: b_2 = f(b_1) = -\frac{b_1}{\sigma (1 - 2\overline{x}_1)b_1+\sigma S_1},\tr
\label{B_b1}\\
&&\label{B_b2} C_{b'_2 = 0}: b_2 = g(b_1) =  \frac{-\sigma S_2 b_1}{1 + \sigma (1-2\overline{x}_2)b_1}.
\en
$f, g$ are both linear fractional functions of $b_1$, an example of which is depicted in Figure \ref{contours}. The derivatives are:
\be
&&f'(b_1) = -\frac{S_1}{\sigma ( (1 - 2\overline{x}_1) b_1 + S_1)^2},\tr
&&g'(b_1) = -\frac{\sigma S_2}{(1 + \sigma (1-2\overline{x}_2)b_1)^2},
\en
therefore, $f, g$ are both decreasing if $\sigma > 0$. It follows that $f'(0) = -(\sigma S_1)^{-1}$ and     $g'(0) = -\sigma S_2$. From Corollary \ref{B_square}, 
\be
\frac{|g'(0)|}{|f'(0)|} < 1.
\en
\begin{figure}
    \centering
    \includegraphics[width=8cm]{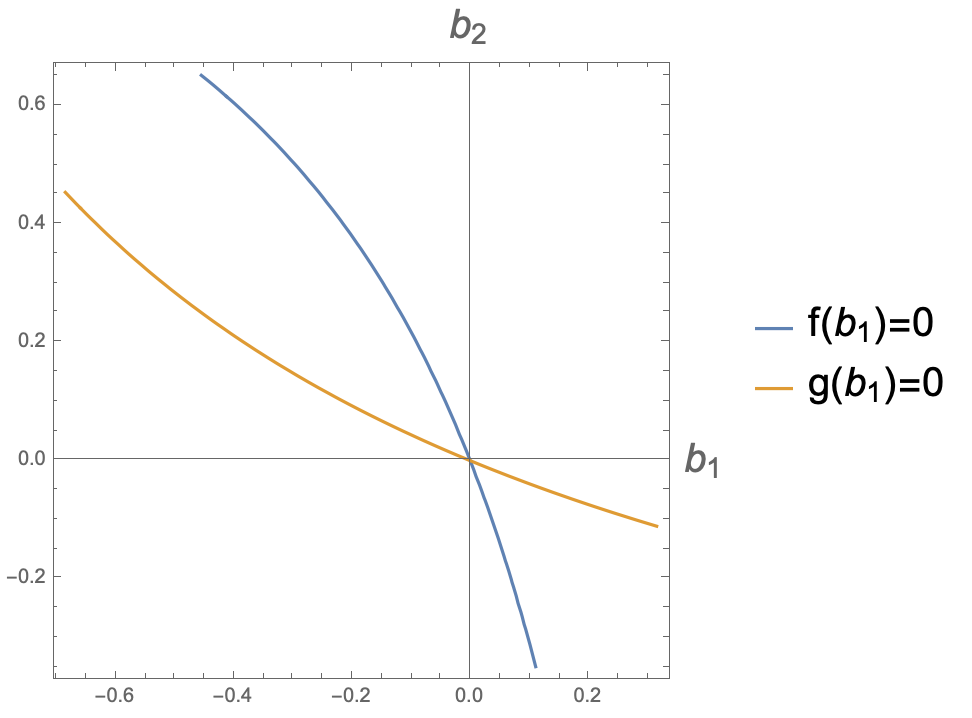}
    \caption{Contours $f(b_1) = 0$ and $g(b_1) = 0$. }
    \label{contours}
\end{figure}
Now, let us look at the secant lines crossing the origin and  $f(-\overline{x}_1)$, $g(-\overline{x}_1)$ separately. From \eqref{B_b1} and \eqref{B_b2},
\be
&& f(-\overline{x}_1) = \frac{1}{\sigma \overline{x}_1}, \\
&& g(-\overline{x}_1) = \frac{\sigma S_2 \overline{x}_1}{1 - \sigma \overline{x}_1 (1-2\overline{x}_2)},
\en
and one can obtain $f(1 - \overline{x}_1)$ and $g(1 - \overline{x}_1)$ similarly. So, $-b_1 /b_2$ is bounded in the following region: 
\be
 \{(b_1, b_2)|b_1 b_2 < 0, f(b_1) g(b_1) < 0\},
\en
and the bound is given by the slopes of the tangent lines at $(0, 0)$ and the secant lines. 

Another important point to notice is that $f(b_1)$ and $g(b_1)$ intersect exactly once. Which can be proved from Lemma \ref{B_triv}, \eqref{B_b1} and \eqref{B_b2}:
\begin{lemma}\label{region_part}
Assume $\sigma > 0$, $f(b_1) = g(b_1)$ has exactly one solution $b_1 = 0$ in the feasible region $b_1 \in [-\overline{x}_1, 1 - \overline{x}_1]$.
\end{lemma}
\begin{proof}
The solution $b_1 = 0$ is obvious. For $b_1 \neq 0$, $f(b_1) = g(b_1)$ is equivalent to:
\be\label{cross_point}
\sigma^2 S_2 \left( (1-2\overline{x}_1)b_1 + S_1\right)- \sigma (1-2\overline{x}_2)b_1 = 1.
\en
We will show that the left hand side is always less than one. This is a linear function, so we only need to show it at both end points. At $b_1 = -\overline{x}_1$, the left hand side can be simplified as:
\be
\sigma\overline{x}_1 (1-\overline{x}_2) + \sigma \overline{x}_1 \overline{x}_2 \left( 
\sigma\overline{x}_1 (1-\overline{x}_2) - 1 \right) < 1,\nonumber
\en
where we used Lemma \ref{B_triv}. Similarly, at $b_1 = 1-\overline{x}_1$, the left hand side of \eqref{cross_point} is:
\be
\sigma\overline{x}_2 (1-\overline{x}_1) + \sigma (1-\overline{x}_1)(1 - \overline{x}_2) \left( 
\sigma\overline{x}_2 (1-\overline{x}_1) - 1 \right) < 1.\nonumber
\en
\end{proof}

From Lemma \ref{region_part}, the feasible region of $\b$ is divided into four parts by $f(b_1)$ and $g(b_1)$. If $f(b_1)g(b_1) > 0$, then EM update goes to the positive region. Otherwise, $f(b_1)g(b_1) < 0$. In this region, neither $b_1$ nor $b_2$ changes the sign. Because $-b_1/b_2$ is bounded, from \eqref{diff_1} and \eqref{diff_2}, $||\b||^{(t+1)}_1 \leq q||\b||^{(t)}_1$ with $0 < q < 1$ being a constant. So, in the worst case, $\b$ will converge to the neighborhood of the origin at a linear rate, and then shift to the positive regions at a linear rate, according to Lemma \ref{small_b_2}. This lemma can be used because the EM update is not singular: it does not map a measure nonzero set to a measure zero set. Hence, after finitely many steps, at the neighborhood of the origin, the random distribution at the beginning is still random.

Figure \ref{examples} is an example of the trajectory. 
\begin{figure}
    \centering
    \includegraphics[width=8cm]{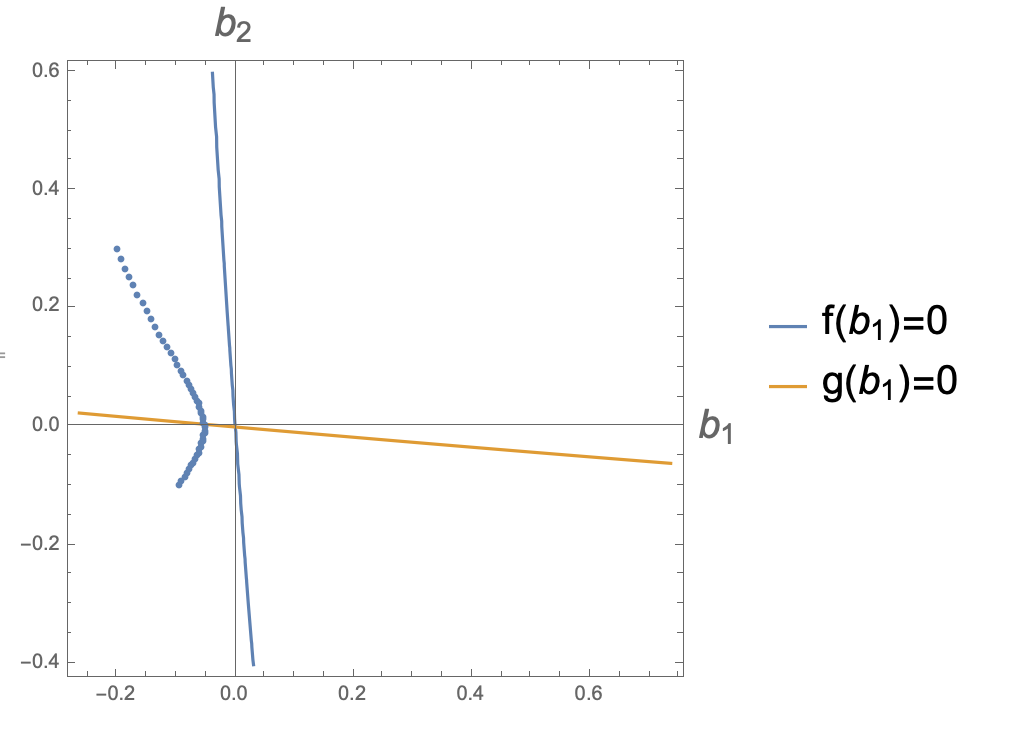}
    \caption{An example of the trajectory. $\b$ moves from the second orthant to a positive region, by shrinking its norm and rotating. }
    \label{examples}
\end{figure}

\section{A General Conjecture}\label{conj}

In this appendix, we propose a general conjecture for mixtures of Bernoullis: 
\begin{conjecture}\label{Conj2}
For any number of clusters and a general number of features, with random initialization around $k$-cluster regions, EM will almost always converge to an $m$-cluster point.
\end{conjecture}

Besides empirical evidence, we can show theoretical guarantees at $m=2$. In this case, the problem reduces to showing the convergence to positive regions proposed in Section \ref{pos_reg}. The convergence to the positive regions is observed empirically for mixtures of two Bernoullis, which always happens with random initialization. 

The first result shows that if $||{\bm \l}||$ is small, ${\bm \l}$ will converge to the positive regions:
\begin{proposition}[\textbf{Convergence with small $||{\bm \l}||$ initialization, general}]\label{small_b_gen}
For mixtures of two Bernoullis, $\exists \delta > 0$ small enough, with a random ${\bm \l}$ initialized from the $L_1$ ball $||{\bm \l}||_1 < \delta$ and the update function defined by {EM}, ${\bm \l}$ will converge to the positive regions.
\end{proposition}

\begin{proof}
Around $||{\bm \l}|| \sim 0$, $Z_1 \sim 1$. Expanding \eqref{update_EM_gen} to the linear order, we find that the update of ${\bm \l}$ can be linearized as:
\be
M({\bm \l}) = {\bf A} {\bm \l},
\en
where $A_{ii} = 1$ and $A_{ij} = (2\mu_i^*)^2 \pi_1^* \pi_2^* S_i^{-1} Z_1^{-1} > 0$ for $i \neq j$. 
 Hence, ${\bf A}$ is an irreducible matrix. After enough iterations, ${\bm \l}$ will converge to the linear span of the largest eigenvector of ${\bf A}$. 

From the Perron-Frobenius theorem \cite{meyer2000matrix}, ${\bf A}$ has a unique largest real eigenvalue, and  ${\rm eig}_{\rm max}({\bf A})  \geq \min_i \sum_j {A_{ij}} > 1$. Also, the maximal eigenvector of ${\bf A}$, ${\bf v}_{\rm max}$, is a multiple of an all positive vector. Therefore, we can prove the proposition in a similar fashion as the proof of  Lemma \ref{small_b_2}.
\end{proof}

This proposition also tells us that EM has an effect of rotating ${\bm \lambda}$ to the positive regions. It is interesting to observe that such unstable fixed point ${\bm \l} = 0$ is analogous to the strict saddle points studied in \cite{lee2016gradient}. It might be possible to use stable manifold theorem to prove our conjecture at $m=2$.

Another special case is when the $\max_i\lambda_i$ increases or $\min_i\lambda_i$ decreases. For a given ${\bm \lambda}$, we can order the components. WLOG, we assume $\l_1 \leq \l_2 \leq \dots \l_i < 0 = \l_{i+1} = \dots = \l_{j} < \l_{j+1}\leq \dots \leq \l_D$. We can show the following proposition:
\begin{proposition}\label{order_b_gen}
For mixtures of two Bernoullis, assume $\l_1 \leq \l_2 \leq \dots \l_i < 0 = \l_{i+1} = \dots = \l_{j} < \l_{j+1}\leq  \dots \leq \l_D$, if $M({\bm \l})_1 < \lambda_1$ or $M({\bm \l})_D > \lambda_D$, then ${\bm \l}$ will eventually converge to the positive regions. Otherwise, we have $M({\bm \l})_1 \geq \lambda_1$ and $M({\bm \l})_D \leq \lambda_D$.
\end{proposition}

\begin{proof}
From the definitions of $B_{1i}$ and $B_{2i}$, \eqref{B1iB2i}, we have
\be
B_{11} - B_{21} \geq \dots \geq B_{1D} - B_{2D}.
\en
$M_1({\bm \l}) < \lambda_1$, from \eqref{update_EM_gen}, tells us that $B_{11} - B_{21} < 0$, and thus every $\l_i$ decreases. As each $\l_i$ decreases, $B_{1i} - B_{2i}$ will get smaller as well. Therefore, all $\l_i$'s decrease at least as a linear function. Since the feasible region is bounded, ${\bm \l}$ will converge to $-\R^D_{++}$ eventually. 

Similarly, if $M_D({\bm \l}) > \lambda_D$, we know that ${\bm \l}$ will converge to $\R^D_{++}$ eventually.

Otherwise, we must have $B_{11} - B_{21} \geq 0$ and $B_{1D} - B_{2D} \leq 0$, yielding $M_1({\bm \l}) \geq \lambda_1$ and $M_D({\bm \l}) \leq \lambda_D$.
\end{proof}
The two patterns $M({\bm \l})_1 < \lambda_1$ and $M({\bm \l})_D > \lambda_D$ have been observed in experiments very frequently, while the case with $M({\bm \l})_1 \geq \lambda_1$ and $M({\bm \l})_D \leq \lambda_D$ needs some further understanding.

\section{Proof for mixtures of two Gaussians with fixed equal covariances}\label{2-gmm-gen}

In this appendix, we generalize our analysis for mixtures of two Gaussians with identity covariances to a slightly more general setting of mixtures of two Gaussians, given that the two covariance matrices are equal and fixed. 
The result is almost the same as Theorem \ref{balance_EM}, except that we replace the normal inner product in Euclidean space to a new inner product defined as $\langle a, b\rangle_{\Sigma} = a^T \S^{-1} b$. 

\begin{theorem}\label{balance_EM_c}
Consider a mixture of two Gaussians with $\pi_1^*\in (0, 1)$, the same covariance matrices and true distribution
\be
p^*(x) = \pi^*_1 \N(\x|\mmu^*, \S) + \pi^*_2 \N(\x|-\mmu^*, \S)
\en
When $\pi_1 = \epsilon$ is initialized to be small enough, $\mmu_2 = \xm$ and $\b^T \S^{-1} \mmu^* \neq 0$, EM increases $\pi_1$ exponentially fast.
\end{theorem}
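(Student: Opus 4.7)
The plan is to reduce Theorem \ref{balance_EM_c} to the already-established identity-covariance result (Theorem \ref{balance_EM}) by a global linear change of variables, namely the whitening map $y = \Sigma^{-1/2} x$. Under this map, the density $\N(\x \mid \mmu, \S)$ transforms to $\N(y \mid \S^{-1/2}\mmu, I)$ (times a constant Jacobian), so the true distribution becomes a mixture of two unit-covariance Gaussians with means $\pm \S^{-1/2}\mmu^*$, and the model parameters $(\pi_1, \mmu_1, \mmu_2)$ in $x$-space correspond to $(\pi_1, \S^{-1/2}\mmu_1, \S^{-1/2}\mmu_2)$ in $y$-space. The condition $\b^T \S^{-1} \mmu^* \neq 0$ becomes $\tilde{\b}^T \tilde{\mmu}^* \neq 0$ in the new coordinates, where $\tilde{\b} = \S^{-1/2}\b$ and $\tilde{\mmu}^* = \S^{-1/2}\mmu^*$; this is exactly the hypothesis of Theorem \ref{balance_EM}.

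The central step is to verify that the EM iterates are equivariant under this linear change of variables. For each data point $x_i$, the posterior responsibility $p(z=k \mid x_i)$ depends on $\mmu_k$ only through Mahalanobis distances $(x_i - \mmu_k)^T \S^{-1}(x_i - \mmu_k) = (y_i - \tilde{\mmu}_k)^T(y_i - \tilde{\mmu}_k)$, so the E-step is invariant. The M-step for $\pi_1$ is just the average responsibility, which is also coordinate-free. The M-step for the means (if the means are updated; here we only need $\pi_1$'s update) is a weighted empirical mean, which commutes with linear maps. Thus the EM trajectory in $y$-space is precisely the image under $\S^{-1/2}$ of the EM trajectory in $x$-space, and crucially the sequence $\pi_1^{(t)}$ is the same in both.

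Once equivariance is established, applying Theorem \ref{balance_EM} to the whitened problem yields exponential growth of $\pi_1^{(t)}$ directly, provided the initialization conditions translate correctly. The condition $\mmu_2 = \xm$ in $x$-space becomes $\tilde{\mmu}_2 = \overline{y}$ in $y$-space since $\S^{-1/2}$ is linear and $\overline{y} = \S^{-1/2}\xm$, and the non-degeneracy condition $\tilde{\b}^T \tilde{\mmu}^* \neq 0$ is exactly $\b^T \S^{-1}\mmu^* \neq 0$, matching the hypothesis.

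The main technical obstacle is a pedestrian but careful one: writing out the E-step and M-step formulas explicitly to confirm that all $\S$-dependence disappears after whitening, so that the EM dynamics in $y$-space really is the identity-covariance EM dynamics. Once that bookkeeping is done, no new analysis beyond Theorem \ref{balance_EM} is required, which explains why the paper states that ``the result is almost the same'' modulo replacing the Euclidean inner product by $\langle \cdot, \cdot\rangle_\S$.
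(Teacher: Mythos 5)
Your proposal is correct, but it takes a genuinely different route from the paper. The paper does not reduce to the identity-covariance case; instead it redoes the whole computation directly in $x$-space with the inner product $\langle a,b\rangle_\S = a^T\S^{-1}b$: it computes $\tilde q_1(\x)=p^*(\x)\gamma_1(\x)$ explicitly as an unnormalized two-component mixture with means shifted by $\b$, integrates to get $Z_1 = \pi_1^* e^{2\pi_2^*\b^T\S^{-1}\mmu^*}+\pi_2^* e^{-2\pi_1^*\b^T\S^{-1}\mmu^*}$, applies $e^x\geq 1+x$ to conclude $Z_1>1$, and then tracks the update $\mmu_1^{(t+1)}=\mmu_1^{(t)}+\delta\mmu^*$ to show $|\b^T\S^{-1}\mmu^*|$ (hence $Z_1$) grows across iterations. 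Your whitening argument replaces all of this with a one-time verification that the population EM map is equivariant under the linear change of variables $y=\S^{-1/2}x$ (shared covariance means the normalizing constants and the Jacobian cancel in the responsibilities, the $\pi_1$-update is an expectation of a coordinate-free quantity, and the mean update commutes with linear maps), after which Theorem \ref{balance_EM} applies verbatim to the whitened problem. Your approach is more modular and explains structurally why the statement is the identity-covariance statement with $\langle\cdot,\cdot\rangle$ replaced by $\langle\cdot,\cdot\rangle_\S$; the paper's approach buys the explicit closed form for $Z_1$ and the iteration-by-iteration growth of $\b^T\S^{-1}\mmu^*$, which makes the exponential rate quantitatively visible. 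One small point of care: the theorem is about population EM, so your equivariance check should be phrased as a change of variables in the integrals against $p^*$ rather than over data points $x_i$, but this changes nothing substantive.
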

\begin{proof}
It suffices to show that $Z_1 > 1$ and $Z_1$ grows if initially $\b^T  \S^{-1} \mmu^* \neq 0$. To calculate $Z_1$, we first compute $\tilde{q}_1(\x) = p^*(\x)\gamma_1(\x)$:
\be\label{tilde_q_2_c}
\tilde{q}_1(\x) &=& \pi_1^* e^{\b^T \S^{-1} (\mmu^* - \mmu_2)}\N(\x| \mmu^* +\b, \S) \tr
&+& \pi_2^* e^{-\b^T \S^{-1} (\mmu^* + \mmu_2)} \N(\x| -\mmu^* + \b, \S).\tr
\en
This equation shows that $\tilde{q}_1(\x)$ corresponds to an un-normalized mixture of Gaussians with their means shifted by $b$ and their mixing coefficients rescaled in comparison to $p^*(\x)$.
If $\b = {\bf 0}$, then $\tilde{q}_1(\x) = p^*(\x)$. So, $\b$ describes how different $\tilde{q}_1(\x)$ deviates from $p^*(\x)$.

The partition function can be computed by integrating out $\tilde{q}_1(\x)$:
\be
Z_1 =  \pi_1^* e^{\b^T\S^{-1}(\mmu^* - \mmu_2)} + \pi_2^* e^{-\b^T \S^{-1} (\mmu^* + \mmu_2)}.
\en
In fact, $\mmu_2 = \overline{\x} = (\pi_1^* - \pi_2^*)\mmu^*$ which can be derived from $p^*(\x)$. So, $Z_1$ becomes:
\be\label{Z_q_c}
Z_1 =  \pi_1^* e^{2\pi_2^* \b^T  \S^{-1}\mmu^*} + \pi_2^* e^{-2\pi_1^* \b^T \S^{-1} \mmu^*}.
\en
Using the fact $e^x \geq 1 + x$ and that equality holds iff $x = 0$, we can show that when $\b^T \S^{-1} \mmu^* \neq 0$, $Z_1 > 1$. 

Now, let us show that $Z_1$ increases. It suffices to prove that $|\b^T \S^{-1} \mmu^*|$ increases. From \eqref{tilde_q_2_c}, we have the update equation for $\mmu_1$: $\mmu_1 \leftarrow (\pi'_1 - \pi'_2)\mmu^* + \b$,
 with
\be\label{pi1prime_c}
\pi'_1 = \pi_1^* Z_1^{-1}  e^{2\pi_2^* \b^T \S^{-1} \mmu^*}, \pi'_2 = \pi_2^* Z_1^{-1}  e^{-2\pi_1^* \b^T \S^{-1} \mmu^*}.\nonumber
\en
If $\b^T \S^{-1} \mmu^* > 0$, then $\pi'_1 > \pi_1^*$ and $\pi'_2 < \pi_2^*$. So, $\mmu_1^{(t+1)} = \mmu_1^{(t)} + \delta \mmu^*$ with $\delta = \pi'_1-\pi_1^* + \pi_2^* - \pi'_2> 0$, and
\be\label{bt+1_c}
(\b^{(t+1)})^T \S^{-1} \mmu^* &=& (\b^{(t)})^T  \S^{-1} \mmu^* + \delta \mmu^T \S^{-1} \mmu \tr
&>& (\b^{(t)})^T \S^{-1} \mmu^*,
\en
where we use the superscript $(t+1)$ to denote the updated values and $(t)$ to denote the old values. 

Similarly, we can prove that $\b^T \S^{-1}\mmu^* $ will decrease if $\b^T \S^{-1}\mmu^* < 0$. Hence, from \eqref{Z_q_c}, $Z_1$ increases under EM if initially $\b^T \S^{-1}\mmu^* \neq 0$.
\end{proof}

\end{document}